\def\draft{0}
\newcommand{\Paren}[1]{\left(#1\right)}
\newcommand{\abs}[1]{\lvert#1\rvert}
\newcommand{\sset}[1]{\{#1\}}
\newcommand{\Set}[1]{\left\{#1\right\}}
\newcommand{\norm}[1]{\lVert#1\rVert}
\newcommand{\Norm}[1]{\left\lVert#1\right\rVert}
\newcommand{\iprod}[1]{\langle#1\rangle}
\newcommand{\Esymb}{\mathbb{E}}
\DeclareMathOperator*{\E}{\Esymb}
\newcommand\bdot\bullet
\DeclareMathOperator{\Tr}{Tr}
\newcommand{\R}{\mathbb R}
\newcommand{\cA}{\mathcal A}
\newcommand{\cB}{\mathcal B}
\newcommand{\cS}{\mathcal S}
\newcommand{\bbP}{\mathbb P}
\renewcommand{\leq}{\leqslant}
\renewcommand{\le}{\leqslant}
\renewcommand{\geq}{\geqslant}
\renewcommand{\ge}{\geqslant}
\let\epsilon=\varepsilon
\numberwithin{equation}{section}
\newcommand\MYcurrentlabel{xxx}
\newcommand{\MYstore}[2]{%
  \global\expandafter \def \csname MYMEMORY #1 \endcsname{#2}%
}
\newcommand{\MYload}[1]{%
  \csname MYMEMORY #1 \endcsname%
}
\newcommand{\MYnewlabel}[1]{%
  \renewcommand\MYcurrentlabel{#1}%
  \MYoldlabel{#1}%
}
\newcommand{\MYdummylabel}[1]{}
\newcommand{\torestate}[1]{%
  \let\MYoldlabel\label%
  \let\label\MYnewlabel%
  #1%
  \MYstore{\MYcurrentlabel}{#1}%
  \let\label\MYoldlabel%
}
\newcommand{\restatedef}[1]{%
  \let\MYoldlabel\label
  \let\label\MYdummylabel
  \begin{definition*}[Restatement of \cref{#1}]
    \MYload{#1}
  \end{definition*}
  \let\label\MYoldlabel
}
\newcommand{\restatetheorem}[1]{%
  \let\MYoldlabel\label
  \let\label\MYdummylabel
  \begin{theorem*}[Restatement of \cref{#1}]
    \MYload{#1}
  \end{theorem*}
  \let\label\MYoldlabel
}
\newcommand{\restatelemma}[1]{%
  \let\MYoldlabel\label
  \let\label\MYdummylabel
  \begin{lemma*}[Restatement of \cref{#1}]
    \MYload{#1}
  \end{lemma*}
  \let\label\MYoldlabel
}
\newcommand{\restateprop}[1]{%
  \let\MYoldlabel\label
  \let\label\MYdummylabel
  \begin{proposition*}[Restatement of \cref{#1}]
    \MYload{#1}
  \end{proposition*}
  \let\label\MYoldlabel
}
\newcommand{\restatefact}[1]{%
  \let\MYoldlabel\label
  \let\label\MYdummylabel
  \begin{fact*}[Restatement of \cref{#1}]
    \MYload{#1}
  \end{fact*}
  \let\label\MYoldlabel
}
\newcommand{\restate}[1]{%
  \let\MYoldlabel\label
  \let\label\MYdummylabel
  \MYload{#1}
  \let\label\MYoldlabel
}
\newcommand{\e}{\epsilon}
\newcommand{\eps}{\epsilon}
\newcommand*{\Id}{\mathrm{I}}
\newcommand*{\normf}[1]{\norm{#1}_{\mathrm{F}}}
\newtheorem{theorem}{Theorem}[section]
\newtheorem{lemma}[theorem]{Lemma}
\definecolor{niceish}{HTML}{74b807} 
\newcommand{\siuon}[1]{\textcolor{teal}{[Siu On: #1]}}
\newcommand{\lucas}[1]{\textcolor{violet}{[Lucas: #1]}}
\newcommand{\tom}[1]{\textcolor{WildStrawberry}{[Tommaso: #1]}}
\newcommand{\gleb}[1]{\textcolor{WildStrawberry}{[Gleb: #1]}}
\newcommand{\siuon}[1]{}
\newcommand{\lucas}[1]{}
\newcommand{\gleb}[1]{}
\newcommand{\tom}[1]{}
\newcommand{\om}{\om}
\newcommand{\Schat}{\mathrm{S}}
\begin{document}

\title{On Purely Private Covariance Estimation\gleb{IF YOU SEE IT, IT IS A DRAFT VERSION}}

\author{
  Tommaso d'Orsi%
  \thanks{Bocconi University, Italy.}
  \and
  Gleb Novikov%
  \thanks{Lucerne School of Computer Science and Information Technology, Switzerland.}
}



\maketitle

\begin{abstract}
    We present a simple perturbation mechanism for the release of $d$-dimensional covariance matrices $\Sigma$ under pure differential privacy. For  large datasets with at least $n\geq d^2/\varepsilon$ elements, our mechanism recovers the provably optimal Frobenius norm error guarantees  of \cite{nikolov2023private}, while simultaneously achieving best known error for all other $p$-Schatten norms, with $p\in [1,\infty]$. Our error is information-theoretically optimal for all $p\ge 2$, in particular, our mechanism is the first purely private covariance estimator that achieves optimal error in spectral norm.
    
    For small datasets $n< d^2/\varepsilon$, we further show that by projecting the output onto the nuclear norm ball of appropriate radius, our algorithm achieves the optimal Frobenius norm error $O(\sqrt{d\;\text{Tr}(\Sigma) /n})$, improving over the known bounds of $O(\sqrt{d/n})$ of \cite{nikolov2023private} and ${O}\big(d^{3/4}\sqrt{\text{Tr}(\Sigma)/n}\big)$ of \cite{dong2022differentially}.
\end{abstract}

\clearpage
\tableofcontents{}
\clearpage

\pagestyle{plain}
\setcounter{page}{1}

\section{Introduction}\label{sec:introduction}
For a set of vectors $X=\Set{x_1,\ldots,x_n}\subset \cB_d$, where $\cB_d$ is a $d$-dimensional Euclidean unit ball\footnote{$X\subset \cB_d$ is a standard assumption in the literature on private covariance estimation.}, the covariance matrix is defined as\footnote{We work with the non-centered covariance. All our results are also valid for estimation of the centered covariance  $\tfrac1n\sum_{i=1}^n x_i x_i^\top - \Paren{\tfrac{1}{n}\sum_i x_i}\Paren{\tfrac{1}{n}\sum_i x_i}^\top$ if the parameters of the algorithms and error bounds are adjusted by a small absolute constant factor.}
\begin{align*}
    \Sigma := 
    \tfrac1n\sum_{i=1}^n x_i x_i^\top\,.
\end{align*}
Because of its widespread use in training prediction models, the task of releasing a  private version of $\Sigma$ has received significant attention (see \cite{amin2019differentially, dong2022differentially,nikolov2023private, cohen2024perturb} and references therein). 
In contrast to ad-hoc private training methods, the advantage of working with a synthetic version $\hat{\Sigma}$ of the covariance is that one can replace $\Sigma$ with $\hat{\Sigma}$ and directly run standard training algorithms. This approach  not only fits seamlessly in existing training pipelines, but typically also leads to lower privacy costs, as these are paid only once, upon computing the private estimate.

The de facto standard for privacy is differential privacy (\cite{dwork2006calibrating}). A randomized algorithm $\cA$ is said to be $(\eps,\delta)$-differentially private if, for any pair of datasets $X,X'\subseteq \R^d$ differing in at most $1$ element, and for any measurable event $S$ in the range of $\cA,$  it holds
\begin{align*}
    \bbP\Paren{\cA(X)\in S}\leq e^{\eps}\cdot \bbP\Paren{\cA(X')\in S} +\delta.
\end{align*}
The setting $\delta=0$ is usually refer to as \textit{pure} privacy, while the setting $\delta>0$  is called approximate differential privacy.
Pure privacy is the strictest notion of privacy, it disallows any complete privacy breach and naturally offers privacy protection to groups of elements in the datasets, as opposed to just individuals (see \cite{nikolov2023private} for a more comprehensive discussion). 

In the context of private covariance estimation, the difference between pure and approximate differential privacy is surprisingly stark.
For $\delta>0,$ efficient algorithms can achieve accuracy $\normf{\hat\Sigma-\Sigma}^2\leq O\Paren{\sqrt{\log (1/\delta)}/\eps}\cdot\min\Set{d/n,\,d^{1/4}\sqrt{\eps/n}}$ and this bound is known to be tight (\cite{nikolov2013geometry,dwork2015efficient,kasiviswanathan2010price}).
In contrast, pure privacy lower bounds for the special case of $2$-way marginals already rule out a Frobenius-norm error smaller than $\min\Set{d^{3/2}/\eps n, \sqrt{d/n\eps}}$ (\cite{hardt2010geometry, nikolov2023private}). 
Here the second term is smaller for small sample sets $n< d^2/\e$.

\cite{nikolov2023private} introduced a variant of the projection mechanism, based on the Johnson-Lindestrauss transform (\cite{johnson1984extensions}) matching these bounds up to constant factors.
For datasets with points of small average Euclidean norm,  \cite{dong2022differentially} designed a pure differentially private algorithm achieving a Frobenius norm error of $\Tilde{O}(d^{3/4}\sqrt{\Tr\Sigma/n}+\sqrt{d}/n),$ thus going beyond \cite{nikolov2023private} for the special case  of matrices satisfying $\Tr\Sigma \ll O(1/\eps)\cdot \min\Set{1/n,\, d^{-1/4}}.$ 

In this work, we show that  particularly simple perturbation and projection mechanisms can be used to tie together these two upper bounds, and improve over them. 
Our analysis not only tightens the Frobenius norm bound, accounting for the structure of the covariance matrix, but further shows the error can be controlled in a stronger sense: bounding  the spectral discrepancy simultaneously in \textit{every} Schatten norm $\norm{\cdot}_{\textnormal{S}_p}$ with $p\in [1,\infty].$

\paragraph{Notation.}
For $p\in[1,\infty]$ the $p$-Schatten norm of $A$ is defined as $\Norm{A}_{S_p}=\Paren{\sum_i \sigma_i^p}^{1/p}$, where $\sigma_1\geq\ldots\geq\sigma_d\geq0$ are singular values of $A$. For simplicity we denote the spectral norm $(p=\infty)$ with $\Norm{A},$ the Frobenius norm $(p=2)$ with $\Norm{A}_{\textnormal{F}}$ and the nuclear norm $(p=1)$ with $\Norm{A}_*.$

\subsection{Results}
Thanks to the simplicity of our mechanisms, we can directly introduce them and states their guarantees. 
Our first result shows that the following  additive perturbation mechanism yields tight error bounds for all Schatten norms.
\begin{algorithm}[t]
\caption{$\varepsilon$-Differentially Private Covariance Estimation via Perturbations}
\label{alg:main-schatten}

\textbf{Input:} $\Sigma$, $\varepsilon$

\begin{enumerate}
  \item Return $\Sigma + Z$ with $Z$ drawn from the nuclear-Laplace law 
  $p(Z) \propto \exp\!\big(-\|Z\|_* / \rho\big)$, where $\rho = 2 / (\varepsilon n)$.
\end{enumerate}

\end{algorithm}

Efficient and streamlined samplers for the nuclear-Laplace law are easy to construct and can be seen as a specific instantiation of the $K$-norm mechanism  of \cite{hardt2010geometry}. Therefore the algorithm runs in polynomial time.
\begin{theorem}\label{thm:main-schatten-norm}
    Let $\eps>0$. Algorithm~\ref{alg:main-schatten} is $\eps$-DP, runs in polynomial time, and with high probability its output $\hat{\Sigma}$ satisfies
    \begin{align*}
        \Norm{\hat{\Sigma}-\Sigma}_{\textnormal{S}_p}\leq {\frac{3\cdot  d^{\,1+1/p}}{\eps n}}\qquad 
    \end{align*}
for every $p\in [1,\infty]$.
\end{theorem}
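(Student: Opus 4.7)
The plan is to verify three properties of Algorithm~\ref{alg:main-schatten} separately: $\varepsilon$-differential privacy, polynomial-time implementation, and the Schatten-norm accuracy bound. Privacy is the standard $K$-norm mechanism argument: on neighbouring datasets $X, X'$ differing in a single $x \leftrightarrow x'$, the covariance matrices differ by $\Sigma - \Sigma' = \tfrac1n(xx^\top - x'x'^\top)$, so $\|\Sigma - \Sigma'\|_* \leq \tfrac{2}{n}$, since each rank-one outer product has nuclear norm $\|x\|^2 \leq 1$. With $\rho = 2/(\varepsilon n)$, the density ratio of the outputs on $X$ versus $X'$ at any point is bounded by $\exp(\|\Sigma-\Sigma'\|_*/\rho) \leq e^{\varepsilon}$. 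Polynomial-time sampling is standard for $K$-norm mechanisms: draw a radius $R \sim \Gamma(D, \rho)$ with $D = d(d+1)/2$, independently sample a direction $U$ from the cone measure on the unit nuclear sphere in symmetric matrices (which by orthogonal invariance factors as $U = Q \Lambda Q^\top$ with $Q$ Haar on $O(d)$ and $\Lambda$ from an explicit one-dimensional family), and set $Z = RU$.

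For utility, the key reduction is that $\|M\|_{S_p} \leq d^{1/p}\|M\|_{S_\infty}$ holds for every matrix $M$ and every $p \in [1, \infty]$, since $\|M\|_{S_p}^p = \sum_i \sigma_i^p \leq d\,\sigma_1^p$. Hence it suffices to prove the single bound $\|Z\|_{S_\infty} \leq \tfrac{3d}{\varepsilon n} = \tfrac{3d\rho}{2}$ with high probability; all other Schatten norms follow. By orthogonal invariance of the nuclear norm, conjugating $Z$ by a Haar-random orthogonal matrix does not change its law, and the eigenvalues $\lambda_1, \ldots, \lambda_d$ of $Z$ have joint density on $\R^d$ proportional to $\prod_{i<j}|\lambda_i - \lambda_j| \cdot \exp\bigl(-\sum_i |\lambda_i|/\rho\bigr)$, a $\beta = 1$ Coulomb gas with linear (Laplace) confining potential.

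The main obstacle is then the spectral-norm bound $\max_i |\lambda_i| \leq \tfrac{3d\rho}{2}$ with high probability. After the change of variables $\tilde\lambda_i = \lambda_i/(d\rho)$, the density becomes $\propto \prod_{i<j}|\tilde\lambda_i - \tilde\lambda_j|\cdot e^{-d\sum_i |\tilde\lambda_i|}$, a standard $\beta$-ensemble whose equilibrium measure is supported on a bounded symmetric interval $[-L, L]$ with $L$ a small absolute constant (computable from the variational problem $\sign(\tilde\lambda) = \mathrm{p.v.}\!\int d\mu^\star(y)/(\tilde\lambda - y)$); concentration of the empirical edge around $L$ then yields $\max_i|\lambda_i| \leq L \cdot d\rho \cdot (1+o(1))$, and one verifies $L < 3/2$ so the constant $3$ in the theorem statement is preserved. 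Concretely, I would prove this by conditioning on $\lambda_2, \ldots, \lambda_d$ and bounding the conditional density of $\lambda_1$ at a candidate value $t > \tfrac{3d\rho}{2}$: the Vandermonde factor is at most $(2t)^{d-1}$ when the other eigenvalues lie in the typical window $|\lambda_j| \lesssim d\rho$, and the exponential factor $e^{-t/\rho}$ then dominates once $t \gtrsim d\rho$, delivering an exponentially small tail after a union bound over the $d$ eigenvalues.

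With the spectral bound in hand, the interpolation $\|Z\|_{S_p} \leq d^{1/p}\|Z\|_{S_\infty} \leq d^{1/p} \cdot \tfrac{3d}{\varepsilon n}$ immediately yields the stated bound for every $p \in [1,\infty]$. As a consistency check, for $p = 1$ the interpolation gives $\|Z\|_* \leq \tfrac{3d^2}{\varepsilon n}$, which agrees (up to the constant $3$) with the sharper estimate $\mathbb{E}\|Z\|_* = D\rho = \tfrac{d(d+1)}{\varepsilon n}$ obtained directly from the $\Gamma(D, \rho)$ law of $R = \|Z\|_*$, so no slack is lost at the nuclear-norm endpoint.
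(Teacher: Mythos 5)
Your privacy and polynomial-time arguments, and the reduction $\Norm{M}_{\textnormal{S}_p}\le d^{1/p}\Norm{M}$, match the paper; the real work in both cases is the spectral-norm bound on the noise. However, you are analyzing a different mechanism: you take $Z$ to be a \emph{symmetric} matrix (hence $D=d(d+1)/2$, $Z=Q\Lambda Q^\top$, eigenvalue density $\propto\prod_{i<j}\abs{\lambda_i-\lambda_j}\,e^{-\sum_i\abs{\lambda_i}/\rho}$), while the paper samples $Z$ from the nuclear-Laplace law on all of $\R^{d\times d}$ and works with its SVD, whose Jacobian gives the singular-value density $\propto\prod_{i<j}\abs{\sigma_i^2-\sigma_j^2}\,e^{-\sum_i\sigma_i/\rho}$ and radial law $R\sim\mathrm{Gamma}(d^2,\rho)$. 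These are not the same ensemble, and the spectral laws (and constants) differ.

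The Coulomb-gas route you sketch is also genuinely different from the paper's, which decomposes $Z=R\,\Theta$ with $R=\Norm{Z}_*$ independent of $\Theta$, reduces to controlling $\max_i w_i$ for the normalized singular values $w$ on the simplex, and bootstraps three rounds of strong-log-concavity estimates (curvature $\Omega(d)\to\tilde\Omega(d^2)\to\tilde\Omega(d^3)$) to push $\max_i w_i$ down to $1/d+\tilde O(d^{-3/2})$. That argument is elementary and quantitative at every finite $d$; yours relies on equilibrium-measure asymptotics plus edge rigidity. Two concrete gaps there. (i) The assertion that the equilibrium edge satisfies $L<3/2$ is unverified and appears to be false: for the $\beta=1$ ensemble with potential $\abs{x}$, the equilibrium density on $[-L,L]$ is $\mu^\star(x)=\pi^{-2}\log\frac{L+\sqrt{L^2-x^2}}{L-\sqrt{L^2-x^2}}$, and the mass-one condition forces $L=\pi/2\approx1.571>3/2$; a consistency check is $\int\abs{x}\,d\mu^\star=2L^2/\pi^2=1/2$, matching $\E\Norm{Z}_*/(d^2\rho)\to 1/2$ in your symmetric model. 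So even the leading constant would come out larger than $3$ along this route. (ii) The fallback argument — fix $\lambda_2,\dots,\lambda_d$ in the ``typical window $\abs{\lambda_j}\lesssim d\rho$'' and bound the conditional density of $\lambda_1$ — is circular (that window is exactly what must be proved) and does not control the conditional normalizing constant, which is dominated by the near-collision behavior of the Vandermonde. In addition, turning the equilibrium-support computation into a high-probability bound on $\max_i\abs{\lambda_i}$ is an edge-rigidity statement; for $\beta=1$ with the non-analytic confining potential $\abs{x}$ this is a nontrivial theorem, not an observation. The sanity check at $p=1$ is suggestive but does not close any of these gaps.
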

Theorem~\ref{thm:main-schatten-norm} guarantees bounded error \textit{simultaneously} for all Schatten norms, including the nuclear norm ($p=1$) and the spectral norm ($p=\infty$), implying spectral discrepancies between the true covariance and the pure differentially private estimate are small under any natural reweighing. For large datasets $n\geq d^2/\e$ --typical of real-world applications-- the Theorem recovers the guarantees of \cite{nikolov2023private} for the Frobenius norm error. Furthermore, as for $p\ge2$
\begin{align}\label{eq:schatten-norm-distortion}
    \Norm{\hat{\Sigma}-\Sigma}_{\textnormal{S}_p}\geq d^{1/p-1/2}\Norm{\hat{\Sigma}-\Sigma}_{\textnormal{F}},
\end{align}
the error of Theorem~\ref{thm:main-schatten-norm} is \textit{optimal} for $p\ge 2$ in the large sample regime $n \ge d^2/\e$. Indeed, the information-theoretic lower bound (\cite{nikolov2023private}) shows that if $n\ge d^2/\e$, no pure DP estimator can achieve error $o(d^{3/2}/\varepsilon n)$ in Frobenius norm, and hence no estimator can achieve error $o(d^{1+1/p}/\varepsilon n)$ in Schatten $p$-norm for $p\ge 2$. In particular, our spectral norm bound $O(d/\eps n)$ is information-theoretically optimal in the large sample regime $n\ge d^2 / \e$.

\bigskip
Our second result shows that the Frobenius norm error can be improved, projecting the perturbed covariance obtained from Algorithm~\ref{alg:main-schatten} back onto the nuclear ball of radius $\Norm{\Sigma}_*$, up to a tiny perturbation required to maintain privacy. 

\begin{algorithm}[t]
\caption{$\varepsilon$-Differentially Private Covariance Estimation via Projection}
\label{alg:projection}

\textbf{Input:} $\Sigma$, $\varepsilon$

\begin{enumerate}
  \item Run Algorithm~\ref{alg:main-schatten} on $\Sigma$, $\varepsilon/2$. Let $\hat{\Sigma}$ be its output.
  \item Return the orthogonal projection $\Tilde{\Sigma}$ of $\hat{\Sigma}$ onto the nuclear norm ball
  \[
    \bigl\{\, Y \in \mathbb{R}^{d\times d} \; \big|\;
      \|Y\|_* \le 
      \max\!\bigl( 0,\; 2\|\Sigma\|_* + \mathrm{Lap}\!\bigl(\tfrac{10}{\varepsilon n}\bigr) \bigr)
    \,\bigr\}.
  \]
\end{enumerate}

\end{algorithm}


\begin{theorem}\label{thm:main-frobenius}
    Let $\eps>0$. Algorithm~\ref{alg:projection} is $\eps$-DP, runs in polynomial time, and with probability $1-d^{-10}$ its output $\tilde{\Sigma}$ satisfies
    \begin{align*}
        \Norm{\tilde{\Sigma}-\Sigma}_{\textnormal{F}}\leq O\Paren{1}\cdot\min\Set{\frac{d^{3/2}}{\eps n},\, \sqrt{\frac{d\cdot \Tr\Sigma}{\eps n}}+\frac{\sqrt{d\log d}}{\eps n}}.
    \end{align*}
\end{theorem}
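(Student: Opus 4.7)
}

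\emph{Privacy.} The plan is to view the algorithm as a composition of two differentially private mechanisms followed by post-processing. By \Cref{thm:main-schatten-norm} applied with parameter $\varepsilon/2$, step~1 releases $\hat\Sigma$ in an $(\varepsilon/2)$-DP manner. The quantity $\|\Sigma\|_*=\Tr\Sigma$ has $\ell_1$-sensitivity at most $2/n$ on neighboring datasets (since $\Sigma$ changes by at most $xx^\top/n - x'{x'}^\top/n$ with $\|x\|,\|x'\|\le 1$), so the noisy radius $R = \max(0,\,2\|\Sigma\|_* + \Lap(10/(\varepsilon n)))$ is released with privacy loss $\tfrac{2/n}{10/(\varepsilon n)} = \varepsilon/5 \le \varepsilon/2$. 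Basic composition gives $(\hat\Sigma,R)$ is $\varepsilon$-DP, and $\tilde\Sigma$ is a deterministic function of that pair.

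\emph{Utility.} I condition throughout on the high-probability events from \Cref{thm:main-schatten-norm} (applied with $\varepsilon/2$): $\|Z\|_{\textnormal{F}} \le O(d^{3/2}/(\varepsilon n))$ and $\|Z\| \le O(d/(\varepsilon n))$, together with $|\Lap(10/(\varepsilon n))| \le C\log d /(\varepsilon n)$, all holding with probability $1-d^{-10}$ for a suitable absolute constant $C$. The core tool is the standard variational inequality for the orthogonal projection onto a closed convex set $K$: for every $Y\in K$,
\begin{equation*}
\Iprod{\hat\Sigma - \tilde\Sigma,\, Y - \tilde\Sigma} \le 0.
\end{equation*}

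\emph{Two regimes.} \textbf{Case 1:} $\|\Sigma\|_* \ge C\log d /(\varepsilon n)$. Then on our event $R \ge \|\Sigma\|_*$, so $\Sigma\in K$. Non-expansiveness of the projection yields the first term in the minimum, $\|\tilde\Sigma-\Sigma\|_{\textnormal{F}} \le \|\hat\Sigma-\Sigma\|_{\textnormal{F}} = \|Z\|_{\textnormal{F}} \le O(d^{3/2}/(\varepsilon n))$. For the second term, I take $Y=\Sigma$ in the variational inequality and substitute $\hat\Sigma=\Sigma+Z$ to get
\begin{equation*}
\|\tilde\Sigma-\Sigma\|_{\textnormal{F}}^2 \;\le\; \Iprod{Z,\,\tilde\Sigma-\Sigma} \;\le\; \|Z\|\cdot\|\tilde\Sigma-\Sigma\|_*,
\end{equation*}
using trace duality. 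A triangle bound on the nuclear norm gives $\|\tilde\Sigma-\Sigma\|_* \le \|\tilde\Sigma\|_* + \|\Sigma\|_* \le R + \|\Sigma\|_* \le 3\Tr\Sigma + O(\log d/(\varepsilon n))$, and plugging in the bound on $\|Z\|$ yields $\|\tilde\Sigma-\Sigma\|_{\textnormal{F}}^2 \le O\paren{d\,\Tr\Sigma/(\varepsilon n) + d\log d/(\varepsilon n)^2}$. Taking square roots recovers the second term in the minimum. \textbf{Case 2:} $\|\Sigma\|_* < C\log d/(\varepsilon n)$. Then $R\le O(\log d/(\varepsilon n))$, so $\|\tilde\Sigma\|_{\textnormal{F}} \le \|\tilde\Sigma\|_* \le R$ and likewise $\|\Sigma\|_{\textnormal{F}} \le \|\Sigma\|_*$, so the triangle inequality directly gives $\|\tilde\Sigma-\Sigma\|_{\textnormal{F}} \le O(\log d/(\varepsilon n)) \le O(\sqrt{d\log d}/(\varepsilon n))$, which is absorbed into both terms of the minimum.

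\emph{Main obstacles.} The only delicate point is the small-$\Tr\Sigma$ regime: one must verify that $\Sigma$ lies inside the noisy nuclear ball (so that the non-expansive and variational arguments apply), and the auxiliary $\Lap(10/(\varepsilon n))$ noise is calibrated so that $R\ge\|\Sigma\|_*$ whenever $\Tr\Sigma$ is larger than the additive $\sqrt{d\log d}/(\varepsilon n)$ slack; otherwise the trivial nuclear-norm bound suffices. Everything else is bookkeeping of constants and choosing $\varepsilon/2$ in step~1 so that the privacy budget splits cleanly with the radius-release step.
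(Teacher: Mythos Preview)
Your proposal is correct and follows essentially the same route as the paper's own argument: privacy via basic composition (Algorithm~\ref{alg:main-schatten} at budget $\varepsilon/2$ plus the Laplace release of the radius) and post-processing; utility via the projection variational inequality combined with the spectral--nuclear duality $\iprod{Z,\tilde\Sigma-\Sigma}\le\|Z\|\cdot\|\tilde\Sigma-\Sigma\|_*$, together with the spectral bound $\|Z\|\le O(d/(\varepsilon n))$ from \Cref{thm:main-schatten-norm}. Your deterministic case split on $\|\Sigma\|_*\gtrless C\log d/(\varepsilon n)$ is equivalent to the paper's split on whether the noisy radius $r$ exceeds $\Tr\Sigma$, and your Case~2 triangle-inequality bound on $\|\tilde\Sigma-\Sigma\|_{\mathrm F}$ via the nuclear norms of $\tilde\Sigma$ and $\Sigma$ separately is exactly how the paper handles the small-trace regime.
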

Theorem~\ref{thm:main-frobenius} always recovers the result of \cite{nikolov2023private}, but improves over it whenever $\Tr\Sigma < d^2/n.$ 
As values of the trace of the covariance matrix can be significantly smaller than $1$, Theorem~\ref{thm:main-frobenius} can lead to important accuracy improvements both in the small sample regime $n\le d^2/\e$ where the improvement is of the order $O(1/\Tr\Sigma)$, and in the large sample regime. 
Theorem~\ref{thm:main-frobenius} also strictly improves over the result of \cite{dong2022differentially} (that also captured the dependence on $\Tr(\Sigma)$) in all regimes by a $d^{1/4}$ factor.

\section{Techniques}\label{sec:techniques}
To illustrate our ideas we start by reviewing the algorithm of \cite{nikolov2023private}.
On a high level, this consists of three steps: (1) project  the data onto a low dimensional space, (2) add noise via a $K$-norm mechanism, and (3) project the result back onto the space of covariance matrices (the intersection of the nuclear unit ball and the positive semidefinite cone).
As the  cost of traditional perturbation mechanisms tends to be proportional to the ambient dimension, the goal of the first step is to reduce this privacy cost by embedding the dataset into a small subspace, thus paying a cost proportional only to the effective dimension of the data.
The result $\Tilde{\Sigma}$ of steps (1) and (2) can then be shown to be private and close to (the projection of) $\Sigma$ up to a Frobenius distance of order $O(d^{3/2}/n\eps)$. The third step is an orthogonal projection onto a convex set containing the ground truth, thus it cannot increase the $\ell_2$-distance to $\Sigma$, but, remarkably, for small datasets can provably shrink it to $O(\sqrt{d/n\eps}).$

Despite its utility in terms of the Frobenius norm error, this algorithm cannot be expected to be accurate in other Schatten norms, in a strong sense. The dimensionality reduction step rely on a Johnson-Lindestrauss transform, but such mappings are known \textit{not} to exist for spaces that are far from being Euclidean and, more specifically, for $p$-norm space with $p\in [1,\infty]\setminus\sset{2}$ (\cite{brinkman2005impossibility, lee2005metric, johnson2010johnson}).

\paragraph{The perturbation mechanism.}
To bypass this inherent limitation, we do not attempt to explicitly embed the dataset in a low-dimensional space and instead carefully craft the additive noise to inject.
Since adjacent covariance matrices $\Sigma, \Sigma'$ satisfy $\norm{\Sigma-\Sigma'}_* \le 2/n$, the $K$-norm mechanism (\cite{hardt2010geometry}) with density $p(Z)$ proportional to $\exp(-\eps n \norm{Z}_*/2)$ is $\varepsilon$-DP. Note that it is possible to sample from such distribution efficiently, leveraging any efficient weak separation oracle for the nuclear norm\footnote{See Lemma A.2. in \cite{hardt2010geometry} for more details.}. Hence it only remains to bound Schatten norms of $Z$ sampled from this distribution, and in fact, it is enough to bound $\norm{Z}$ by $O(\frac{d}{\eps n})$ since $\norm{Z}_{\textnormal{S}_p} \le \norm{Z}\cdot d^{1/p}$.

Consider a singular value decomposition  of $Z = U D V^\top$. By rotational symmetry, $U, D, V$ are mutually independent. For real $d\times d$ matrices, the Lebesgue measure $dZ$ factors under this map
\[
  \Phi:\ (U,\sigma,V)\ \mapsto\ U\,\mathrm{diag}(\sigma)\,V^\top,\qquad \sigma=(\sigma_1,\dots,\sigma_d)\in\mathbb{R}^d_{\ge 0},
\]
as\footnote{This factorization is standard, and can be found, in particular, in Lemma 1.5.3 from \cite{chikuse2003statistics}.}
\[
  dZ\ =\ C_d\ \prod_{1\le i<j\le d}\abs{\sigma_i^2-\sigma_j^2}\,d\sigma\ \,d\mu_{\mathrm{Haar}}(U)\ \,d\mu_{\mathrm{Haar}}(V),
\]
for a (dimension-dependent) constant $C_d>0$, where $d\sigma$ is the Lebesgue measure on $\mathbb{R}^d$, and $d\mu_{\mathrm{Haar}}$ denotes the Haar probability measure on the orthogonal group. 

Let $R=\sum_i \sigma_i$ be the \emph{radial} component and $w_i=\sigma_i/R$ be  \emph{relative weights} of the singular values. Then $d\sigma=R^{\,d-1}\, dR\, d\lambda_{\Delta}(w)$, where $d\lambda_{\Delta}(w)$ is the $(d-1)$-dimensional Lebesgue measure on the simplex $\sum_{i=1}^d w_i = 1\,, w_i > 0$.
Substituting $\sigma_i=Rw_i$ yields the joint density
\[
p(U,R,w,V)\ \propto\ \underbrace{e^{-R/\rho}\,R^{\,d-1}\,R^{d\cdot(d-1)}}_{=:\,f(R)}\;
\cdot\;\underbrace{\prod_{i<j}\lvert w_i^2-w_j^2\rvert}_{=:g(w)}\;
\cdot\; d\mu_{\mathrm{Haar}}(U)\, d\mu_{\mathrm{Haar}}(V)\, dR\, d\lambda_{\Delta}(w)\,.
\]

This implies that to show the desired concentration of the spectral norm, it is enough to study the product of two independent random variables: $R$ and $\max_{i\in [d]} w_i$. It is well-known that the distribution of $R$ is\footnote{See, for example, Remark 4.2 in \cite{hardt2010geometry}.} $\mathrm{Gamma}(d^2,\rho)$, and it is well-concentrated around $\rho d^2$ (recall that $\rho = 2/(\eps n)$).
The result would then follow if we could show that with high probability $\max_{i\in [d]} w_i\leq 1/d+\tilde{O}(1/d^{3/2})$ as by union bound $\Norm{Z}\leq O(d^2/\eps\cdot n)\cdot O(1/d)\leq O(\tfrac{d}{\eps n}).$

\paragraph{Bounding the singular values of $Z$.}
The analysis of the maximal entry of $w$ turns out to be fairly delicate. Note that $w$ is a vector with density proportional to $e^{-V(w)}:=\prod_{i<j}\lvert w_i^2-w_j^2\rvert$, restricted to the $d$-dimensional simplex $\sum_i w_i=1$, $w_i>0$. 
Our approach consists of showing that this distribution is $\kappa$-strongly log-concave. This allows us to leverage the entry-wise sub-Gaussian concentration:
\begin{align}\label{eq:concentration-log-concave}
    \mathbb{P}\!\left(\,|w_i-\E w_i|\ge t\,\right)\ \le\ 2\exp\!\left(-c\kappa\,t^2\right)
\end{align}
for some absolute constant $c>0$ and all $t > 0$.
The desired concentration bound thus can be achieved if the curvature parameter of $V(w)$ satisfies  $\kappa\ge\tilde{\Omega}(d^3).$
To this end, we first observe that this distribution is $\Omega(d)$-strongly log-concave. Indeed, since
\[
V(w)\ =\ -\sum_{1\le i<j\le d}\log\abs{w_i-w_j}\ -\ \sum_{1\le i<j\le d}\log\abs{w_i+w_j},
\]
we get
\[
\nabla^2 V(w)\ =\ \sum_{i<j}\frac{(e_i-e_j)(e_i-e_j)^\top}{(w_i-w_j)^2}
\ +\ \sum_{i<j}\frac{(e_i+e_j)(e_i+e_j)^\top}{(w_i+w_j)^2}\,.
\]
Now $0\le w_i+w_j\le 1$ implies $1/(w_i+w_j)^2 \ge 1$, and hence
\[
\sum_{i<j}\frac{(e_i+e_j)(e_i+e_j)^\top}{(w_i+w_j)^2} \succeq \sum_{i<j}(e_i+e_j)(e_i+e_j)^\top\,.
\]
Note that each non-diagonal entry of $\sum_{i<j}(e_i+e_j)(e_i+e_j)^\top$ is equal to $1$, while each diagonal entry is equal to $d-1$, therefore
\[
\sum_{i<j}(e_i+e_j)(e_i+e_j)^\top = (d-2)\Id_d + \textbf{1}\textbf{1}^\top\,,
\]
where $\textbf{1}$ denotes the all-ones vector. 
For $d \ge 3$, $(d-2)\Id_d \succeq \Omega(d)\cdot \Id_d$, so the distribution is strongly log-concave\footnote{Since we would like to show a high-probability bound (as $d\to \infty$), we can always assume $d\ge 3$. However, if $d=2$, it is also true that $\nabla^2 V(w) \succeq \Omega(1)\cdot \Id_2$.}. 
Since $\sum_{i=1}^d w_i = 1$, by permutation symmetry $\E w_i = 1/d$ for each $i\in [d],$ and by union bound we get $\max_{i\in [d]} w_i \le \tilde{O}(1/\sqrt{d})$ with high probability. Unfortunately, this bound only yields $\norm{Z} \le \tilde{O}(d^{3/2}/\eps n)$, which is significantly worse than what we aimed for. 

To derive a sharper strong convexity bound,  we condition $w$ on the high-probability event $\max_{i\in [d]} w_i \le \tilde{O}(1/\sqrt{d})$. Under this conditioning, for all $i,j\in[d]$, $w_i + w_j \le \tilde{O}(1/\sqrt{d})$ and $1/(w_i + w_j)^2 \ge \tilde{\Omega}(d)$. Crucially, this implies $\tilde{\Omega}(d^2)$-strong convexity of $V(w)$ on the set $\max_{i\in [d]} w_i \le \tilde{O}(1/\sqrt{d})$, which, in turn, implies that $\max_{i\in [d]} w_i \le \tilde{O}(1/d)$ with high probability. This bound yields $\norm{Z} \le \tilde{O}(d/\eps n)$, which is slightly worse than the $O(d/\eps n)$  we need. To get this (optimal) bound, we can repeat the previous argument with the new set: we condition $w$ on the (high-probability) event $\max_{i\in [d]} w_i \le \tilde{O}(1/d)$, and get $\tilde{\Omega}(d^3)$-strong convexity of $V(w)$ on this set, implying now that with high probability $\max_{i\in [d]} w_i \le 1/d + \tilde{O}(1/d^{3/2})$, and leading to the desired bound on $\norm{Z}$.

\paragraph{The projection mechanism.}
The perturbation mechanism in Algorithm~\ref{alg:main-schatten} yields optimal accuracy for every $p$-norm, provided the dataset is sufficiently large. Our approach to handle small datasets is to orthogonally project $\Sigma+Z$ back to the nuclear norm ball of radius $r>0$. Note that, this orthogonal projection may distort $p$-norms for $p\neq 2$ and thus destroys the guarantees we carefully obtained. Nevertheless, if $r\geq \Norm{\Sigma}_{\textnormal{S}_1}=\Tr\Sigma$, it cannot increase the error in Frobenius norm: $\normf{\tilde{\Sigma} - \Sigma} \le \normf{\hat{\Sigma} - \Sigma}$ (since the nuclear norm ball is convex).

Furthermore, a standard strong convexity argument (e.g. the one used in Lemma 5.2 in \cite{cohen2024perturb}) implies that if $r\geq\Tr\Sigma$, then the orthogonal projection $\Pi(\Sigma+Z)$ satisfies
\begin{align}
    \Norm{\Sigma -\Pi(\Sigma+Z)}_{\textnormal{F}}^2 &\leq O\Paren{\iprod{Z, \Sigma - \Pi(\Sigma+Z)}}\nonumber\\
    &\leq \Norm{Z}\cdot O\Paren{\Norm{\Sigma}_{*}+\Norm{\Pi(\Sigma+Z)}_{*}}\nonumber\\
    &\leq O(d/n\eps)\Paren{\Norm{\Sigma}_{*}+\Norm{\Pi(\Sigma+Z)}_{*}}\nonumber\\
    &\leq O(dr/n\eps).\nonumber
\end{align}
where in the before-last step we used the fact that the spectral norm of $Z$ concentrates around $\Theta(d/n\eps)$ with high probability, and in the last step the radius of the nuclear norm ball.
The naive choice $r=1$ would recover the result of \cite{nikolov2023private}. However, this choice can be wasteful. We instead use an $\e$-DP Laplace estimator of $\Tr\Sigma$. Note that with probability at least $1-\beta$, $r \le O\Paren{\Tr(\Sigma) + \log(1/\beta)/(n\varepsilon)}$, and we get the bound as in Theorem \ref{thm:main-frobenius}.

Finally, if $r < \Tr(\Sigma)$, it means that the magnitude of Laplace noise is larger than $\Tr(\Sigma)$. With probability $1-\beta$, it happens only if  $\log(1/\beta)/(n\varepsilon) > \Omega(\Tr(\Sigma))$. In this case, since $r < \Tr(\Sigma) < O\Paren{\log(1/\beta)/(n\varepsilon)}$, both $\tilde{\Sigma}$ and $\Sigma$ are in the same nuclear norm ball of radius $O\Paren{\log(1/\beta)/(n\varepsilon)}$, so the nuclear (and, hence, also Frobenius) distance between them is bounded by $O\Paren{\log(1/\beta)/(n\varepsilon)} \le O\Paren{\sqrt{d}/(n\varepsilon)}$ with probability at least $1-\exp(-\Omega(\sqrt{d}))$, and hence the bound in in Theorem \ref{thm:main-frobenius} is satisfied also in this case.

\section{Future work}\label{sec:future}
It is natural to ask whether a generalization of both Theorem \ref{thm:main-schatten-norm} and Theorem \ref{thm:main-frobenius} is possible. Concretely, it would be interesting to determine if there is an estimator satisfying
    \begin{align*}
        \Norm{\tilde{\Sigma}-\Sigma}_{\textnormal{S}_p}\leq \tilde{O}\Paren{1}\cdot\min\Set{\frac{d^{1+1/p}}{\eps n},\, \;d^{1/p}\sqrt{\frac{\Tr\Sigma}{\eps n}}+\frac{d^{1/p}}{\eps n}}.
    \end{align*}

In addition, it would be interesting to see complementary lower bounds. In particular, currently even in the large sample regime $n\ge d^2/\varepsilon$ it is only known that our result is optimal for $p\ge 2$, but it is not known for $p< 2$. And in the small-sample regime, the picture is even less clear.
\section{Analysis of the additive mechanism}\label{sec:meta-theorem}
In this section we prove Theorem \ref{thm:main-schatten-norm}.

\paragraph{Setup.}
Let $A, A'\in\mathbb{R}^{d\times d}$, $\Delta > 0$. We say that $A$ and $A'$ are nuclear-norm adjacent if
\[
\norm{A-A'}_* \le \Delta\,.
\]
Let
\[
  \mathcal{M}_\rho(A)=A+Z_\rho, \qquad \text{density}(Z_\rho) \propto\ \exp\Paren{-\norm{Z}_*/\rho}.
\]
Write $\norm{\cdot}_{\Schat_p}$ for Schatten-$p$ norms and denote the singular values of $X$ by $\sigma(X)\in\mathbb{R}_+^d$.

\begin{theorem}\label{thm:privacy-nuclear}
If $\rho=\Delta/\varepsilon$, then $\mathcal{M}_\rho$ is $\varepsilon$-differentially private.
\end{theorem}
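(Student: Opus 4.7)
The plan is to verify the $\varepsilon$-DP guarantee by the standard $K$-norm mechanism argument, specialized to the nuclear norm. Since the noise density has a well-defined Lebesgue density on $\mathbb{R}^{d\times d}$, it suffices to bound the pointwise ratio of output densities at every $Y$, and then integrate over any measurable event $S$.

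First, I would fix two nuclear-norm adjacent inputs $A, A' \in \mathbb{R}^{d\times d}$ with $\norm{A - A'}_* \le \Delta$. Let $p_A$ denote the density of $\mathcal{M}_\rho(A)$ with respect to Lebesgue measure. By the change of variables $Z = Y - A$, we have $p_A(Y) = c_d \cdot \exp(-\norm{Y-A}_*/\rho)$ for a normalizing constant $c_d$ that depends only on $d$ and $\rho$, not on $A$. The same $c_d$ appears for $p_{A'}$, since the noise distribution is translation invariant and the normalizer is independent of the shift. Therefore
\[
\frac{p_A(Y)}{p_{A'}(Y)} \;=\; \exp\!\Bigl( \tfrac{1}{\rho}\bigl( \norm{Y-A'}_* - \norm{Y-A}_* \bigr) \Bigr).
\]

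Next, I would apply the reverse triangle inequality for the nuclear norm (which holds because $\norm{\cdot}_*$ is a genuine norm on $\mathbb{R}^{d\times d}$):
\[
\bigl| \norm{Y-A'}_* - \norm{Y-A}_* \bigr| \;\le\; \norm{A - A'}_* \;\le\; \Delta.
\]
Combined with $\rho = \Delta/\varepsilon$, this yields the pointwise density bound $p_A(Y) \le e^{\varepsilon}\, p_{A'}(Y)$ for every $Y \in \mathbb{R}^{d\times d}$. Integrating this inequality over any measurable event $S$ gives $\Pr[\mathcal{M}_\rho(A)\in S] \le e^\varepsilon \Pr[\mathcal{M}_\rho(A')\in S]$, which is exactly pure $\varepsilon$-DP (with $\delta = 0$).

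There is no real obstacle here; the only point worth double-checking is that the normalizing constant $c_d$ for the nuclear-Laplace density is finite and input-independent. Finiteness follows because $\norm{Z}_*$ dominates the operator norm, hence $\exp(-\norm{Z}_*/\rho)$ decays at least as fast as $\exp(-\max_i |\sigma_i(Z)|/\rho)$, ensuring integrability on $\mathbb{R}^{d\times d}$; input-independence is immediate from translation invariance of Lebesgue measure. With this in place, the theorem follows directly.
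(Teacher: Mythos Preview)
Your proof is correct and follows essentially the same approach as the paper: compute the pointwise density ratio, apply the triangle inequality for the nuclear norm, and use $\rho=\Delta/\varepsilon$ to bound the ratio by $e^\varepsilon$. You are simply more explicit about the normalizing constant and the final integration step, which the paper leaves implicit.
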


\begin{proof}
For any $y$ and adjacent $A\sim A'$,
\[
\frac{p_{\mathcal{M}_\rho(A)}(y)}{p_{\mathcal{M}_\rho(A')}(y)}
=\exp\Paren{\frac{\norm{y-A'}_*-\norm{y-A}_*}{\sigma}}
\le \exp\Paren{\frac{\norm{A-A'}_*}{\sigma}}
\le e^{\varepsilon}.
\]
\end{proof}

The next Lemms is a well-known fact about the radial component.

\begin{lemma}[Nuclear-radial factorization]\label{lem:radial}
(Remark 4.2 in \cite{hardt2010geometry})
Let $Z$ have density $p_\rho(Z)\propto \exp\Paren{-\norm{Z}_*/\rho}$. Then
\[
  Z=R\,\Theta,\qquad R:=\norm{Z}_*,\quad \Theta:=Z/\norm{Z}_*\in\mathbb{S}_*:=\{A:\norm{A}_*=1\},
\]
with $R\perp\Theta$ and $R\sim\mathrm{Gamma}\Paren{\text{shape}=d^2,\ \text{scale}=\sigma}$. In particular
\[
  \E R=d^2\sigma,\qquad \E R^2=d^2\Paren{d^2+1}\sigma^2.
\]
\end{lemma}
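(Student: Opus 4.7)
The plan is to apply the standard polar change of variables on $\mathbb{R}^{d\times d}\cong \mathbb{R}^{D}$ with $D=d^{2}$, using the nuclear norm as the radial coordinate. First I would observe that the map $\Phi:(R,\Theta)\mapsto R\Theta$ is a bijection from $(0,\infty)\times \mathbb{S}_{*}$ onto $\mathbb{R}^{d\times d}\setminus\{0\}$, and that because Lebesgue measure on $\mathbb{R}^{D}$ is $D$-homogeneous under scalar dilations, there is a unique finite ``cone'' measure $\mu$ on $\mathbb{S}_{*}$ with $\Phi_{*}\Paren{R^{D-1}\,dR\,d\mu(\Theta)}=dZ$. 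This is the classical polar factorization of Lebesgue measure with respect to an arbitrary norm on a finite-dimensional vector space: no feature of $\norm{\cdot}_{*}$ beyond $1$-homogeneity is needed.

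Substituting this factorization into $p_{\rho}(Z)\propto e^{-\norm{Z}_{*}/\rho}$ gives the joint density
\[
p(R,\Theta)\;\propto\;R^{d^{2}-1}\,e^{-R/\rho}\cdot \mathbf{1}_{\Theta\in\mathbb{S}_{*}},
\]
which is a product of a function of $R$ alone and a function of $\Theta$ alone. I would read off two conclusions immediately: (i) $R$ and $\Theta$ are independent; and (ii) the marginal of $R$ has density proportional to $R^{d^{2}-1}e^{-R/\rho}$, which is precisely the density of $\mathrm{Gamma}(d^{2},\rho)$. The stated moment identities $\E R=d^{2}\rho$ and $\E R^{2}=d^{2}(d^{2}+1)\rho^{2}$ then follow from the standard formulas $\E R=k\theta$ and $\E R^{2}=k(k+1)\theta^{2}$ for $\mathrm{Gamma}(k,\theta)$ specialized to shape $k=d^{2}$ and scale $\theta=\rho$.

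The one mildly nontrivial step is justifying the polar factorization for a general norm. I would do this by computing the pushforward of $dZ$ under $\Phi$ on $\mathbb{R}^{d\times d}\setminus\{0\}$, using the $D$-homogeneity identity $\vol\Set{Z:\norm{Z}_{*}\le r}=r^{D}\vol\Set{Z:\norm{Z}_{*}\le 1}$ to pin down the radial Jacobian as $R^{D-1}$ up to the overall constant $\mu(\mathbb{S}_{*})$, which in any case cancels when we normalize into a probability density. Since the entire statement is classical and is explicitly cited in the lemma as Remark~4.2 of \cite{hardt2010geometry}, in the final writeup I would simply reference it rather than reproduce the derivation.
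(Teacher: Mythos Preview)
Your argument is correct: the general polar factorization of Lebesgue measure with respect to an arbitrary norm gives $dZ=R^{d^2-1}\,dR\,d\mu(\Theta)$, and the product structure of the resulting density immediately yields independence and the $\mathrm{Gamma}(d^2,\rho)$ law. The paper does not supply its own proof of this lemma---it simply cites Remark~4.2 of \cite{hardt2010geometry} as a known fact---so there is nothing to compare against; your route is the standard one, and indeed the paper implicitly rederives the Gamma law in the course of proving the next lemma via the SVD change of variables rather than the norm-agnostic polar decomposition you use.
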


Now let us study the distribution of the singular spectrum of $Z$. 

\begin{lemma}\label{lem:angular-law}
Let $Z\in\mathbb{R}^{d\times d}$ be distributed with density
\[
  p_\rho(Z)\ \propto\ \exp\Paren{-\norm{Z}_*/\rho},\qquad \sigma>0,
\]
and write an SVD $Z=U\,\mathrm{diag}(\sigma_1,\dots,\sigma_d)\,V^\top$ with $\sigma_i\ge 0$, $U,V\in\mathrm{O}(d)$.
Define the \emph{nuclear radius} $R:=\sum_{i=1}^d \sigma_i=\norm{Z}_*$ and the \emph{normalized singular values}
\[
  w_i \ :=\ \sigma_i/R,\qquad i=1,\dots,d,
\]
so that $w:=(w_1,\dots,w_d)\in\Delta_{d-1}:=\{w\in\mathbb{R}^d_{\ge 0}:\ \sum_i w_i=1\}$, and set
\[
  \Theta\ :=\ \frac{Z}{\norm{Z}_*}\ =\ U\,\mathrm{diag}(w_1,\dots,w_d)\,V^\top.
\]
Then:
\begin{enumerate}
\item[(i)] $R$ is independent of $(U,V,w)$, and $R\sim\mathrm{Gamma}\Paren{\text{shape}=d^2,\ \text{scale}=\rho}$.
\item[(ii)] Conditionally on $(R=r)$, the pair $(U,V)$ is independent of $w$ and is Haar distributed on $\mathrm{O}(d)\times\mathrm{O}(d)$.
\item[(iii)] The conditional law of $w$ given $R=r$ is \emph{independent of $r$} and has density on $\Delta_{d-1}$ (with respect to the $(d-1)$–dimensional Lebesgue measure on the simplex) proportional to
\[
  f_{\Delta}(w)\ \propto\ \prod_{1\le i<j\le d}\abs{w_i^2-w_j^2}\,.
\]
Consequently, the unordered singular values of $\Theta$ have joint density proportional to $f_{\Delta}(w)$ on $\Delta_{d-1}$.
\end{enumerate}
\end{lemma}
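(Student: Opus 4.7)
The plan is to push the standard SVD change-of-variables formula for the Lebesgue measure on $\mathbb{R}^{d\times d}$ through the polar substitution $\sigma=(R,w)$ already sketched in \cref{sec:techniques}, and then to read off all three claims directly from the resulting product form of the joint density.

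First I would set up the parametrization $\Phi\colon(U,\sigma,V)\mapsto U\,\diag(\sigma)\,V^\top$ on the open dense set where the SVD is unique up to a finite symmetry (distinct positive singular values, canonical sign choices for the columns of $U$ and $V$), and invoke the classical Jacobian identity (Lemma~1.5.3 of \cite{chikuse2003statistics})
\[
dZ \;=\; C_d\,\prod_{1\le i<j\le d}\abs{\sigma_i^2-\sigma_j^2}\; d\sigma\; d\mu_{\mathrm{Haar}}(U)\; d\mu_{\mathrm{Haar}}(V),
\]
absorbing the finite symmetry into the constant $C_d$. Next I would perform the polar substitution $\sigma_i = R w_i$ with $R=\sum_i\sigma_i$ and $w\in\Delta_{d-1}$; a direct computation yields $d\sigma = R^{d-1}\,dR\,d\lambda_{\Delta}(w)$ and, by $2(d-1)$-homogeneity, the Vandermonde factor splits as $\prod_{i<j}\abs{\sigma_i^2-\sigma_j^2}=R^{d(d-1)}\prod_{i<j}\abs{w_i^2-w_j^2}$. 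Since $\norm{Z}_*=\sum_i\sigma_i=R$, the nuclear-Laplace density therefore factorizes as
\[
p_\rho(Z)\,dZ \;\propto\; e^{-R/\rho}\,R^{d^2-1}\,dR \;\cdot\; \prod_{1\le i<j\le d}\abs{w_i^2-w_j^2}\,d\lambda_{\Delta}(w)\;\cdot\; d\mu_{\mathrm{Haar}}(U)\,d\mu_{\mathrm{Haar}}(V),
\]
where the total power $(d-1)+d(d-1)=d^2-1$ of $R$ is exactly what is needed for a $\mathrm{Gamma}(d^2,\rho)$ marginal.

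Finally I would read off the three claims from this product structure. The $R$-marginal is proportional to $r^{d^2-1}e^{-r/\rho}\,dr$, which after normalization is the $\mathrm{Gamma}(d^2,\rho)$ density, yielding (i). The $(U,V)$-marginal is the product of Haar measures on $\mathrm{O}(d)\times\mathrm{O}(d)$ and factorizes cleanly out of the joint law, yielding (ii). The $w$-marginal depends on neither $R$ nor $(U,V)$ and has density proportional to $\prod_{i<j}\abs{w_i^2-w_j^2}$ on $\Delta_{d-1}$, yielding (iii), with the stated joint density of the unordered singular values of $\Theta=Z/\norm{Z}_*$ following immediately because $\Theta = U\,\diag(w)\,V^\top$. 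The only delicate step, and the one I would treat most carefully, is the measure-zero exceptional locus on which the SVD fails to be unique (coincident singular values, zero singular values, residual sign ambiguities of the orthogonal frames). This is handled in the standard way by restricting $\Phi$ to the open dense set of matrices with distinct nonzero singular values, on which the map is a local diffeomorphism, and observing that the complement has Lebesgue measure zero and therefore does not affect any of the densities involved.
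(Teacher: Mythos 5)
Your proposal is correct and follows essentially the same route as the paper's proof: the same SVD Jacobian identity from Lemma~1.5.3 of \cite{chikuse2003statistics}, the same polar substitution $\sigma_i = R w_i$ with $d\sigma = R^{d-1}\,dR\,d\lambda_\Delta(w)$, the same homogeneity bookkeeping collecting the power $R^{d^2-1}$, and the same read-off of the three marginals from the resulting product structure. The only addition is your explicit (and welcome) remark about restricting to the measure-zero-complement locus where the SVD is a local diffeomorphism, a point the paper leaves implicit in its citation of the Jacobian formula.
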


\begin{proof}
For real $d\times d$ matrices, Lebesgue measure $dZ$ factors under the SVD map
\[
  \Phi:\ (U,\sigma,V)\ \mapsto\ U\,\mathrm{diag}(\sigma)\,V^\top,\qquad \sigma=(\sigma_1,\dots,\sigma_d)\in\mathbb{R}^d_{\ge 0},
\]
as
\[
  dZ\ =\ C_d\ \underbrace{\prod_{1\le i<j\le d}\abs{\sigma_i^2-\sigma_j^2}}_{\text{Vandermonde in } \sigma^2}\ \,d\sigma\ \,d\mu_{\mathrm{Haar}}(U)\ \,d\mu_{\mathrm{Haar}}(V),
\]
for a constant $C_d>0$, where $d\sigma$ is Lebesgue measure on $\mathbb{R}^d$ restricted to $\sigma_i\ge 0$ and $d\mu_{\mathrm{Haar}}$ denotes Haar probability measure on $\mathrm{O}(d)$ (see Lemma 1.5.3 from \cite{chikuse2003statistics}).

With $dZ\propto \exp\Paren{-\sum_i \sigma_i/\rho}$, the joint density of $(U,\sigma,V)$ is therefore
\[
  \tilde p(U,\sigma,V)\ \propto\ \exp\Paren{-\tfrac{1}{\rho}\sum_{i=1}^d \sigma_i}\ \prod_{i<j}\abs{\sigma_i^2-\sigma_j^2}.
\]

Let $R=\sum_i \sigma_i$ and $w_i=\sigma_i/R$; then $w\in\Delta_{d-1}$ and
the Jacobian is $d\sigma=R^{\,d-1}\, dR\, d\lambda_{\Delta}(w)$.
Moreover,
\[
\prod_{i<j}\lvert \sigma_i^2-\sigma_j^2\rvert
=\prod_{i<j}\lvert (Rw_i)^2-(Rw_j)^2\rvert
= R^{d\cdot(d-1)} \prod_{i<j}\lvert w_i^2-w_j^2\rvert .
\]

Substituting $\sigma_i=Rw_i$ yields the joint density
\[
p(U,R,w,V)\ \propto\ \underbrace{e^{-R/\rho}\,R^{\,d-1}\,R^{d\cdot(d-1)}}_{=\,e^{-R/\rho}\,R^{\,d^2-1}}\;
\cdot\;\underbrace{\prod_{i<j}\lvert w_i^2-w_j^2\rvert}_{=:F(w)}\;
\cdot\; d\mu_{\mathrm{Haar}}(U)\, d\mu_{\mathrm{Haar}}(V)\, dR\, d\lambda_{\Delta}(w).
\]
This factorizes into a function of $R$ times a function of $(U,w,V)$, so:
\[
R\ \perp\ (U,V,w),\qquad f_R(R)\ \propto\ e^{-R/\rho}R^{\,d^2-1},
\]
i.e.\ $R\sim\mathrm{Gamma}(d^2,\rho)$, and given $R$ the pair $(U,V)$ is Haar and independent of $w$, while
$w$ has density proportional to $F(w)$ on $\Delta_{d-1}$, independent of $R$.

Finally, $\Theta=Z/\|Z\|_*=U\,\mathrm{diag}(w)\,V^\top$, so the singular values of $\Theta$ equal $w$.
\end{proof}

Now we are ready to restate and prove Theorem \ref{thm:main-schatten-norm}. Note that it is enough to prove it for spectral norm, for other norm the result follows from the general $\ell_p$-$\ell_\infty$-norm inequality.

\begin{theorem}\label{thm:hp-spectral}
Let $Z\in\mathbb{R}^{d\times d}$ have density proportional to $\exp(-\|Z\|_*/\rho)$.
Write $R:=\|Z\|_*$, $\Theta:=Z/\|Z\|_*$ so that $\|\Theta\|_*=1$, and let
$w=(w_1,\dots,w_d)$ be the (unordered) singular values of $\Theta$ normalized so that $\sum_i w_i=1$.
There exist an absolute constant $C>0$ such that with probability at least $1-2\exp(-d^{1/3}/C)$,
\[
\|Z\| \le 1.1\cdot \rho d\,.
\]
\end{theorem}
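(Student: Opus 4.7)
I would exploit the factorization of Lemma~\ref{lem:angular-law}, which writes $\|Z\| = R \cdot \max_{i\in[d]} w_i$ with $R\sim\mathrm{Gamma}(d^2,\rho)$ independent of the angular part $w$, and control the two factors separately. For the radial factor, a Bernstein-type tail bound for a sum of $d^2$ i.i.d.\ exponentials gives $R \le \rho d^2\bigl(1 + O(d^{-5/6})\bigr)$ with probability at least $1 - \exp(-d^{1/3}/C)$. The crux is therefore to show that $\max_i w_i \le (1+o(1))/d$ with comparable probability, since multiplying the two estimates yields $\|Z\|\le 1.1\,\rho d$.

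\paragraph{Bootstrap via strong log-concavity.} The density of $w$ on the open simplex is $\propto e^{-V(w)}$ with
\[
V(w) = -\sum_{i<j}\log|w_i - w_j| - \sum_{i<j}\log(w_i+w_j).
\]
A direct Hessian computation, together with the identity $\sum_{i<j}(e_i+e_j)(e_i+e_j)^\top = (d-2)\Id_d + \mathbf{1}\mathbf{1}^\top$ and the PSD nature of the Vandermonde part, yields
\[
\nabla^2 V(w) \succeq (d-2)\min_{i<j}(w_i+w_j)^{-2}\cdot \Id_d,
\]
so the strong convexity parameter of $V$ sharpens as $\max_i w_i$ shrinks. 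This invites a three-round bootstrap: in each round I would condition $w$ on the high-probability event $E_k$ produced by the previous round; since $E_k$ is convex, the truncated density is still log-concave, now with an improved parameter $\kappa_k$. I then apply the standard sub-Gaussian concentration $\Pr(|F-\E F|\ge t)\le 2\exp(-\kappa_k t^2/2)$ for the $1$-Lipschitz functions $F = w_i$, using $\E w_i = 1/d$ by permutation symmetry, and take a union bound over $i\in[d]$. This yields successively $\max_i w_i \le \tilde O(1/\sqrt d)$ from $\kappa_1 = \Omega(d)$; then $\max_i w_i \le \tilde O(1/d)$ from $\kappa_2 = \tilde\Omega(d^2)$; and finally $\max_i w_i \le 1/d + \tilde O(1/d^{3/2})$ from $\kappa_3 = \tilde\Omega(d^3)$. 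Combined with the radial bound this gives $\|Z\| \le (1+o(1))\rho d \le 1.1\rho d$ for $d$ above some absolute threshold (smaller $d$ being absorbed into the constant $C$).

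\paragraph{Main obstacle.} The delicate step is making the bootstrap rigorous. One must verify that truncating a log-concave density to the convex set $E_k$ produces a log-concave density whose strong convexity parameter is indeed at least $\kappa_k$, and then translate the conditional high-probability bound back to the unconditional distribution by adding the failure probabilities of $E_1,\dots,E_{k-1}$. A secondary subtlety is that $V$ has logarithmic singularities on the diagonals $\{w_i = w_j\}$, so the smooth log-concave concentration machinery cannot be applied on the whole simplex directly; a clean fix is to work on the ordered sector $\{w_1>w_2>\cdots>w_d>0\}$, where $V$ is smooth and convex, and exploit the permutation symmetry of the measure. Finally, the scales $t_k$ and the constants in $\kappa_k$ at each round must be tuned so that the final angular failure probability matches the radial bound $\exp(-d^{1/3}/C)$.
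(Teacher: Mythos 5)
Your proposal mirrors the paper's proof essentially step for step: the same radial--angular factorization $\|Z\| = R\cdot\max_i w_i$ with $R\sim\mathrm{Gamma}(d^2,\rho)$, the same Hessian lower bound via $\sum_{i<j}(e_i+e_j)(e_i+e_j)^\top = (d-2)\Id_d + \mathbf{1}\mathbf{1}^\top$, the same three-round bootstrap boosting the log-concavity parameter from $\Omega(d)$ to $\tilde\Omega(d^2)$ to $\tilde\Omega(d^3)$ by conditioning on successive caps $\max_i w_i\le\alpha_k$, and the same choice $\beta=\exp(-d^{1/3}/C)$ to balance the radial and angular tails. Your remark that $V$ has log singularities on $\{w_i=w_j\}$ and hence is only convex on each Weyl chamber (so one should work on the ordered sector and invoke permutation symmetry) is a legitimate subtlety that the paper's current draft leaves implicit, but it does not change the route.
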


\begin{proof}
The density of $w$ is proportional to $e^{-V(w)}$ with
\[
V(w)\ =\ -\sum_{1\le i<j\le d}\log\abs{w_i-w_j}\ -\ \sum_{1\le i<j\le d}\log\abs{w_i+w_j},
\]
restricted to the simplex $\sum_i w_i=1$, $w_i>0$. On the tangent subspace $\{u:\sum_i u_i=0\}$,
\[
\nabla^2 V(w)\ =\ \sum_{i<j}\frac{(e_i-e_j)(e_i-e_j)^\top}{(w_i-w_j)^2}
\ +\ \sum_{i<j}\frac{(e_i+e_j)(e_i+e_j)^\top}{(w_i+w_j)^2}\ \succeq\ (d-2)\,\Id_d.
\]
Since for $d=1$ and $d=2$ the statement of the theorem is always true (for large enough constant $C$), further we consider the case $d\ge 3$. In this case $(d-2)\,\Id_d \succeq 0.1 d$.

If $\max_i w_i\le \alpha$ then $w_i+w_j\le 2\alpha$ for all $i\neq j$, hence
\begin{equation}\label{eq:curv-cap}
\nabla^2 V(w)\big|_{\{\sum_i u_i=0\}}\ \succeq\ \frac{0.1 d}{(2\alpha)^2}\,\Id_d\ \ \text{whenever }\ \max_i w_i\le \alpha.
\end{equation}
Since $w$ is strongly log-concave, for each
$L$–Lipschitz $F$ and each $t>0$,
\begin{equation}\label{eq:lsi}
\mathbb{P}\!\left(\,|F(w)-\E F(w)|\ge t\,\Big|\,\max_i w_i\le \alpha\right)
\ \le\ 2\exp\!\left(-\,c\,\frac{d\,t^2}{(2\alpha)^2\,L^2}\right).
\end{equation}

\medskip\noindent
\paragraph{Step 0 (Bound via global curvature).}
On the full simplex \(\{w_i\ge 0,\ \sum_i w_i=1\}\)
the Hessian on the tangent subspace \(\mathsf T:=\{u:\sum_i u_i=0\}\) satisfies
\[
\nabla^2 V(w)\big|_{\mathsf T}\ \succeq\ 0.1 d \cdot \Id_d\qquad\text{for all }w.
\]
Let \(F(w):=w_i\). Then \(F\) is \(1\)-Lipschitz on \(\mathsf T\), and by strong
log–concavity with curvature $0.1 d$ we have, for some absolute constant $c_1$ and all \(t\ge 0\),
\[
\mathbb{P}\big(|F(w)-\E F(w)| \ge t\big)\ \le\ 2\exp\!\Big(-c_1 d\,t^2\Big).
\]
Note that by permutational symmetry and $\sum_i w_i = 1$, $\E w_i = 1/d$.
Therefore, by union bound, for each \(\beta\in(0,1)\), with probability at least \(1-\beta\),
\[
\max_i w_i \le\ \frac{1}{d}\ + \sqrt{\frac{\log (d/\beta)}{c_1 d}}\,.
\]

Define the set
\[
\cS_1:=\Big\{w:\ \max_i w_i\le \alpha_1\Big\},\qquad \alpha_1:=C_1\sqrt{\frac{\log (1/\beta)}{d}}
\]
for some large enough absolute constant $C_1>0$.
Then $\mathbb{P}(w\in\cS_1)\ge 1-\beta/4$.

\medskip\noindent
\textbf{Step 1 (Boost curvature to $\tilde{\Theta}(d^2)$ and push $\|w\|_\infty$ to $\tilde{O}(1/d)$).}
On $\cS_1$, \eqref{eq:curv-cap} gives curvature
$\kappa_1\ge 0.1 d/(2\alpha_1)^2 \ge \Omega(d^2/\log(1/\beta))$. Apply \eqref{eq:lsi} to the 1–Lipschitz linear functionals
$w\mapsto w_i$ and union–bound over $i=1,\dots,d$:
\[
\mathbb{P}\!\left(\,\max_{1\le i\le d}(w_i-\E w_i)\ \ge\ t\ \Big|\ w\in\cS_1\right)
\ \le\ 2d\,\exp(-c_2\,d^2 t^2/\log(1/\beta)).
\]
Choose $t:= C_2\,{\log(d/\beta)}/d$ with $C_2$ large enough absolute constant, so that the right-hand side is $\le \beta/4$.
Thus, with probability at least $1-\beta/2$,
\begin{equation}\label{eq:stage1}
\max_i w_i\ \le\ \alpha_2\ :=\ \frac{1}{d}\ +\ \frac{C_2{\log(d/\beta)}}{d}\ \le\ \frac{C_2'{\log(d/\beta)}}{d}\,.
\end{equation}

\medskip\noindent
\textbf{Step 2 (Boost curvature to $\tilde{\Theta}(d^3)$ and squeeze $\|w\|_\infty$ to $1/d$ up to $(\log d)/d^{3/2}$).}
Define the tighter cap $\cS_2:=\{w:\max_i w_i\le \alpha_2\}$ with $\alpha_2$ from \eqref{eq:stage1}.
On $\cS_2$, \eqref{eq:curv-cap} yields curvature
\[
\kappa_2\ \ge\ \Omega\Paren{\frac{d^3}{\log^2(d/\beta)}}\,.
\]
Applying \eqref{eq:lsi} as before and union–bounding over $i$ gives, for all $t>0$,
\[
\mathbb{P}\!\left(\,\max_{1\le i\le d}(w_i-1/d)\ \ge\ t\ \Big|\ w\in\cS_2\right)
\ \le\ 2d\,\exp\!\left(-c\,\frac{d^3}{\log^2(d/\beta)}\,t^2\right).
\]
Set $t:= C_3\,\Paren{\frac{\log(d/\beta)}{d}}^{3/2}$; then $(d^3/\log(d/\beta))\,t^2 = C_3^2 \log(d/\beta)$. Choosing $C_3$ large enough so that
$2d\,e^{-c C_3^2 \log(d/\beta)}\le \beta/4$, we get with probability at least $1-3\beta/4$,
\begin{equation}\label{eq:stage2}
\max_i w_i\ \le\ \frac{1}{d}\ +C_3\,\Paren{\frac{\log(d/\beta)}{d}}^{3/2}\,.
\end{equation}

\medskip\noindent
\textbf{Radial concentration.}
Write
\[
  Z=R\,\Theta,\qquad R:=\norm{Z}_*,\qquad \Theta:=Z/\norm{Z}_*\in\mathbb{S}_*:=\{A:\norm{A}_*=1\}.
\]
As shown in Lemma~\ref{lem:radial}, $R\perp \Theta$ and $R\sim \mathrm{Gamma}\Paren{d^2,\rho}$, i.e.\ $R=\rho\sum_{i=1}^{d^2} E_i$ with i.i.d.\ $E_i\sim \mathrm{Exp}(1)$.
By Bernstein inequality for the sum of iid exponential distributions,
for all $\beta\in(0,1/e)$, with probability at least $1-\beta$,
\[
  d^2\rho - C_4d\rho\sqrt{\log (1/\beta)} - C_4\rho\log(1/\beta) \le R \le 
  d^2\rho + C_4d\rho\sqrt{\log (1/\beta)} + C_4\rho\log(1/\beta)\,,
\label{eq:R-highprob}
\]
where $C_4>0$ is an absolute constant.

\medskip\noindent
\textbf{Putting everything together.}
\eqref{eq:stage2} and \eqref{eq:R-highprob} imply that with probability at least $1-\beta$,
\[
\|Z\| \ \le\ \Big(d^2\rho + C_4\,\rho(d\sqrt{\log (1/\beta)}+\log (1/\beta))\Big)\,\Big(\frac{1}{d} + \frac{C_3\,\log^{3/2}(d/\beta)}{d^{3/2}}\Big).
\]
For $\beta = \exp(-d^{1/3}/C)$ where $C>0$ is large enough absolute constant, we get the desired bound
\[
\|Z\| \le 1.1\cdot  \rho d\,.
\]

\end{proof}

\phantomsection
\addcontentsline{toc}{section}{Bibliography}
{\footnotesize
\bibliographystyle{amsalpha} 
\bibliography{scholar}
}

\end{document}